\documentclass[letterpaper]{article} 
\usepackage{aaai2027}  
\usepackage[hyphens]{url}  
\usepackage{graphicx} 
\usepackage{natbib}  
\usepackage{caption} 
\usepackage{amsmath}
\usepackage{amssymb}
\usepackage{amsthm}
\usepackage{booktabs}
\usepackage{multirow}
\usepackage{algorithm}
\usepackage{algorithmic}

\newcommand{\Ewarp}{\mathcal{E}_{\text{warp}}}
\newcommand{\W}{\mathcal{W}}
\newcommand{\D}{\mathcal{D}}
\newcommand{\Uhat}{\hat{U}}
\newcommand{\uhat}{\hat{u}}
\newcommand{\Ld}{L_{\mathcal{D}}}

\newtheorem{lemma}{Lemma}
\newtheorem{proposition}{Proposition}

\title{ChordVideo: One-Step, Training-Free, Temporally Consistent\\
Video Editing via Low-Energy Transport}

\author{
    Zhiqiang Lao
}

\affiliations{
    Independent Researcher
}

\begin{document}

\maketitle

\begin{abstract}
One-step text-to-image models enable training-free, inversion-free editing with only 1--2 network function evaluations (NFE), while ChordEdit stabilizes such edits through low-energy smoothing along sampling time. Applied independently to video frames, however, it produces temporal flicker and edit-strength drift. We introduce \textbf{ChordVideo}, which extends the same low-energy principle to video time through shared noise, motion-aligned causal aggregation of per-frame Chord fields, and an optional temporally smoothed proximal correction. We derive a warping-error bound that separates motion bias from stochastic flicker and predicts diminishing returns with larger temporal windows. On TGVE/DAVIS with two one-step backbones, ChordVideo reduces warping error by \textbf{78\%} and flicker by \textbf{49\%}, improves CLIP frame consistency by \textbf{9--10 points}, and increases background PSNR by about \textbf{1.5,dB}, while retaining \textbf{2 NFE/frame}. Compared with seven multi-step editors, it achieves competitive temporal consistency and source preservation using \textbf{10--60$\times$ fewer model steps per clip}.
\end{abstract}

\section{Introduction}






Diffusion distillation has enabled text-to-image models to synthesize images in a single forward pass \citep{sauer2024sdturbo,nguyen2024swiftbrush,liu2023instaflow}. ChordEdit \citep{chordedit2026} builds on these models to perform efficient image editing. It defines the naive edit field as the difference between the target- and source-conditioned score drifts:

\begin{equation}
R(x,t)=B_t\big(Q(x,t,c_{\text{tar}})-Q(x,t,c_{\text{src}})\big).
\end{equation}

Because this operation subtracts two large and potentially divergent trajectories, the resulting edit field may exhibit high energy and oscillatory behavior. Under the Benamou--Brenier dynamic optimal-transport formulation \citep{benamou2000computational}, ChordEdit decomposes the field as $R=u_t+\eta$, where $u_t$ denotes the underlying low-energy transport and $\eta$ represents zero-mean noise. It then constructs a causal two-point \emph{Chord control field} along the sampling-time axis:

\begin{equation}
\uhat_t(x)=\frac{\tau,R(x,t-\Delta)+\Delta,R(x,t)}{\tau+\Delta}.
\end{equation}

This convex combination reduces the field's $L_2$ energy by Jensen's inequality, attenuating local spikes and stabilizing the large Euler update. Consequently, ChordEdit requires only 1--2 network function evaluations, depending on whether the optional proximal correction is applied.


ChordEdit processes each image independently and therefore does not capture temporal dependencies between video frames. When applied frame by frame, this limitation leads to two failure modes (Fig.~\ref{fig:framewise-failure}). First, independently sampled noise and edit fields introduce \emph{temporal flicker}, causing appearance and structure to fluctuate across frames. Second, variations in the magnitude of $R$ produce \emph{edit-strength drift}, with some frames receiving overly strong edits and others insufficient edits. These artifacts motivate extending the low-variance, low-energy stabilization used along sampling time to the video-time dimension.

\begin{figure}[!tbp]
\centering
\includegraphics[width=0.95\columnwidth]{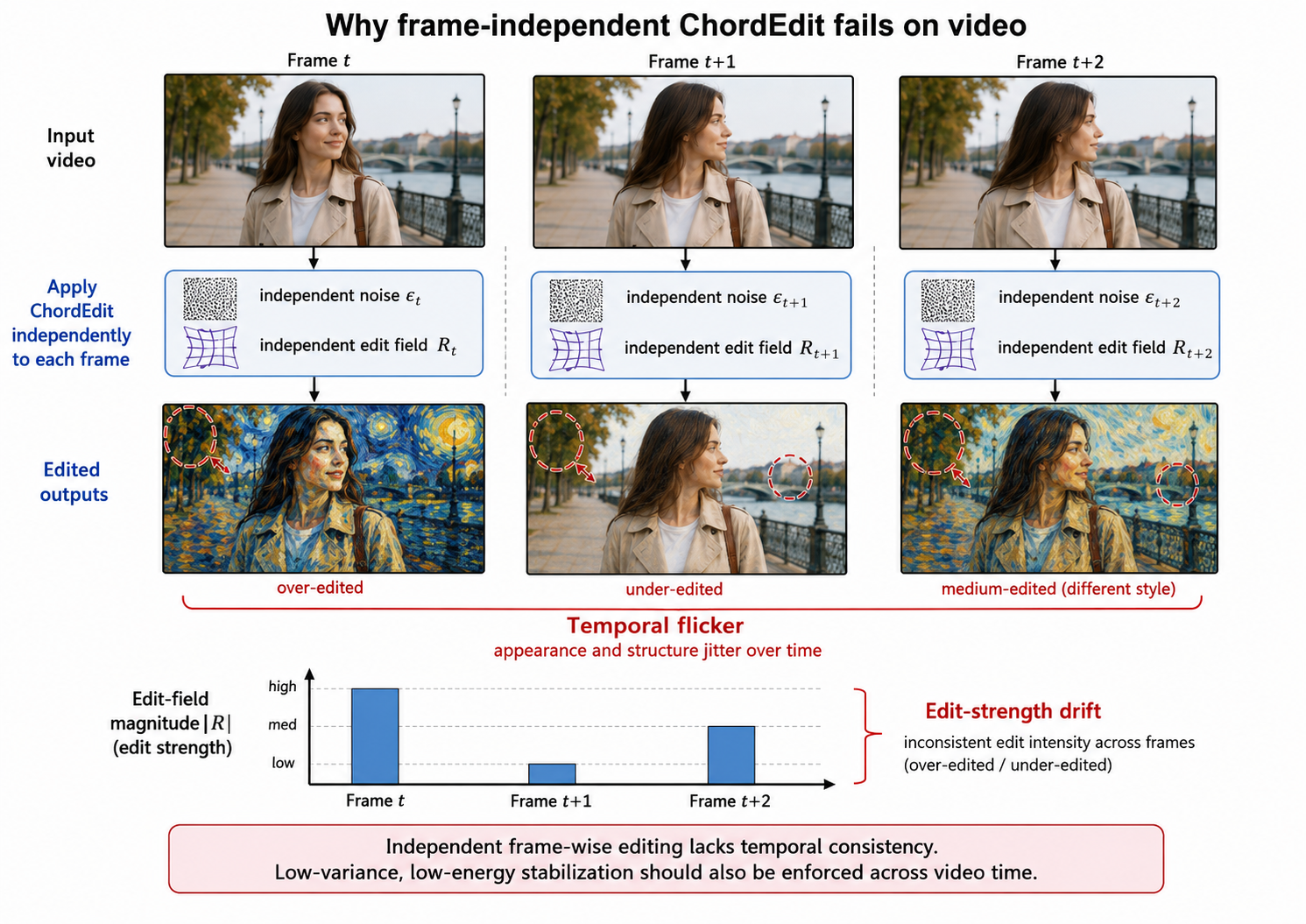}
\caption{Failure modes of frame-independent video editing. Applying ChordEdit
independently to each frame produces temporal flicker from inconsistent
appearance and structure, as well as edit-strength drift from variations in the
edit-field magnitude $\lVert R\rVert$. These artifacts motivate low-energy
stabilization across video time.}
\label{fig:framewise-failure}
\end{figure}

We extend ChordEdit’s low-energy smoothing from the sampling-time axis to the video-time axis $\tau_{\text{frame}}$. After motion alignment, edit fields from neighboring frames are interpreted as noisy measurements of a shared underlying transport signal and aggregated using a causal temporal kernel. This yields a unified optimal-transport view of stabilization across both axes: convex averaging reduces variance while limiting the bias caused by temporal changes in the underlying signal. The resulting method enables one-step, training-free video editing with explicit temporal consistency.


Our main contributions are as follows:
\begin{enumerate}



\item We introduce ChordVideo, the first framework to combine one-step, training-free, inversion-free video editing with explicit temporal stabilization (Sec.~\ref{sec:method}).

\item We derive a warping-error bound that separates motion residuals from stochastic flicker and predicts diminishing returns as the temporal window grows (Sec.~\ref{sec:theory}).

\item We unify sampling-time and video-time stabilization under a low-energy optimal-transport formulation with four ablatable modules.

\item We evaluate ChordVideo on TGVE/DAVIS with SD-Turbo and SwiftBrush-v2, including comparisons with seven multi-step editors, ablations, and failure analysis (Sec.~\ref{sec:exp}).
\end{enumerate}
\section{Related Work}

\subsection{One-step generation and editing.}
SD-Turbo \citep{sauer2024sdturbo}, SwiftBrush-v2 \citep{nguyen2024swiftbrush}, and InstaFlow \citep{liu2023instaflow} reduce conventional multi-step diffusion sampling to only one or two model evaluations. The most closely related approach is ChordEdit \citep{chordedit2026}, which enables training-free, inversion-free image editing by formulating the edit field as low-energy transport. ChordVideo preserves this image-level editing mechanism while extending it to incorporate temporal dependencies across video frames.

\subsection{Distillation objectives and optimal transport.}
One-step generators are part of a broader class of methods designed to learn straight or approximately straight transport trajectories. Consistency models \citep{song2023consistency} enforce a self-consistent mapping along the probability-flow ODE, such that states on the same trajectory are mapped to a shared endpoint. Rectified flow \citep{liu2023rectifiedflow} and flow matching \citep{lipman2022flowmatching}, by contrast, directly learn velocity fields that transport samples between noise and data distributions. ChordEdit is complementary to these methods because its dynamic optimal-transport interpretation is applied to the \emph{editing field}, rather than to the generative trajectory of the underlying backbone. As a result, it requires only the conditional drift $Q(x,t,c)$ and remains independent of the backbone’s particular distillation objective. ChordVideo retains this backbone-agnostic formulation while extending the low-energy principle from sampling time to video time.

\subsection{Text-driven video editing.}
A range of methods has been proposed for temporally consistent video editing, including Tune-A-Video \citep{wu2023tuneavideo}, TokenFlow \citep{geyer2024tokenflow}, Rerender-A-Video \citep{yang2023rerender}, Text2Video-Zero \citep{khachatryan2023text2video}, FateZero \citep{qi2023fatezero}, FLATTEN \citep{cong2024flatten}, ControlVideo \citep{zhao2023controlvideo}, Video-P2P \citep{liu2023videop2p}, and FlowDirector \citep{li2026flowdirector}. These approaches achieve temporal coherence through mechanisms such as multi-step diffusion, cross-frame attention, per-video optimization, inversion, or learned temporal modules. Consequently, they often require tens of model evaluations per frame and may incur additional inversion or tuning costs. Distilled video generators such as CausVid \citep{yin2025causvid} reduce the computational cost of video \emph{generation}; however, our experiments indicate that combining such backbones with plug-and-play feature injection \citep{tumanyan2023pnp} does not by itself ensure temporally consistent source-preserving edits. In contrast, ChordVideo imposes temporal consistency directly on the edit field. Its implicit alignment variant borrows only the nearest-neighbor correspondence strategy from TokenFlow.

\subsection{Temporal alignment and consistency metrics.}
Our default alignment module uses RAFT optical flow \citep{teed2020raft} together with forward--backward consistency masks, and temporal stability is measured using the standard occlusion-masked warping error \citep{lai2018learning}. Optical flow is not itself a contribution of this work; instead, we demonstrate both theoretically and empirically that motion-aligned low-energy averaging suppresses the variance responsible for temporal flicker.

\section{Method: ChordVideo}
\label{sec:method}

Let $V=\{x^{(1)},\dots,x^{(N)}\}$ denote the input video, with
$\{z^{(i)}\}$ representing the corresponding frame-wise VAE latent codes. ChordVideo retains ChordEdit’s training-free, inversion-free, and model-agnostic formulation, as well as its 1--2 NFE editing budget. The additional components introduced by ChordVideo operate exclusively along the temporal frame dimension. Figure~\ref{fig:chordvideo-pipeline} summarizes the complete workflow: Module~A reuses a single noise realization across all frames; each frame is then processed by ChordEdit to obtain a per-frame field; Modules~B and~C motion-align and causally aggregate neighboring fields; and Module~D optionally smooths the proximal corrections before the edited frames are decoded. The following subsections describe these components in detail.

\begin{figure*}[!t]
\centering
\includegraphics[width=0.82\textwidth]{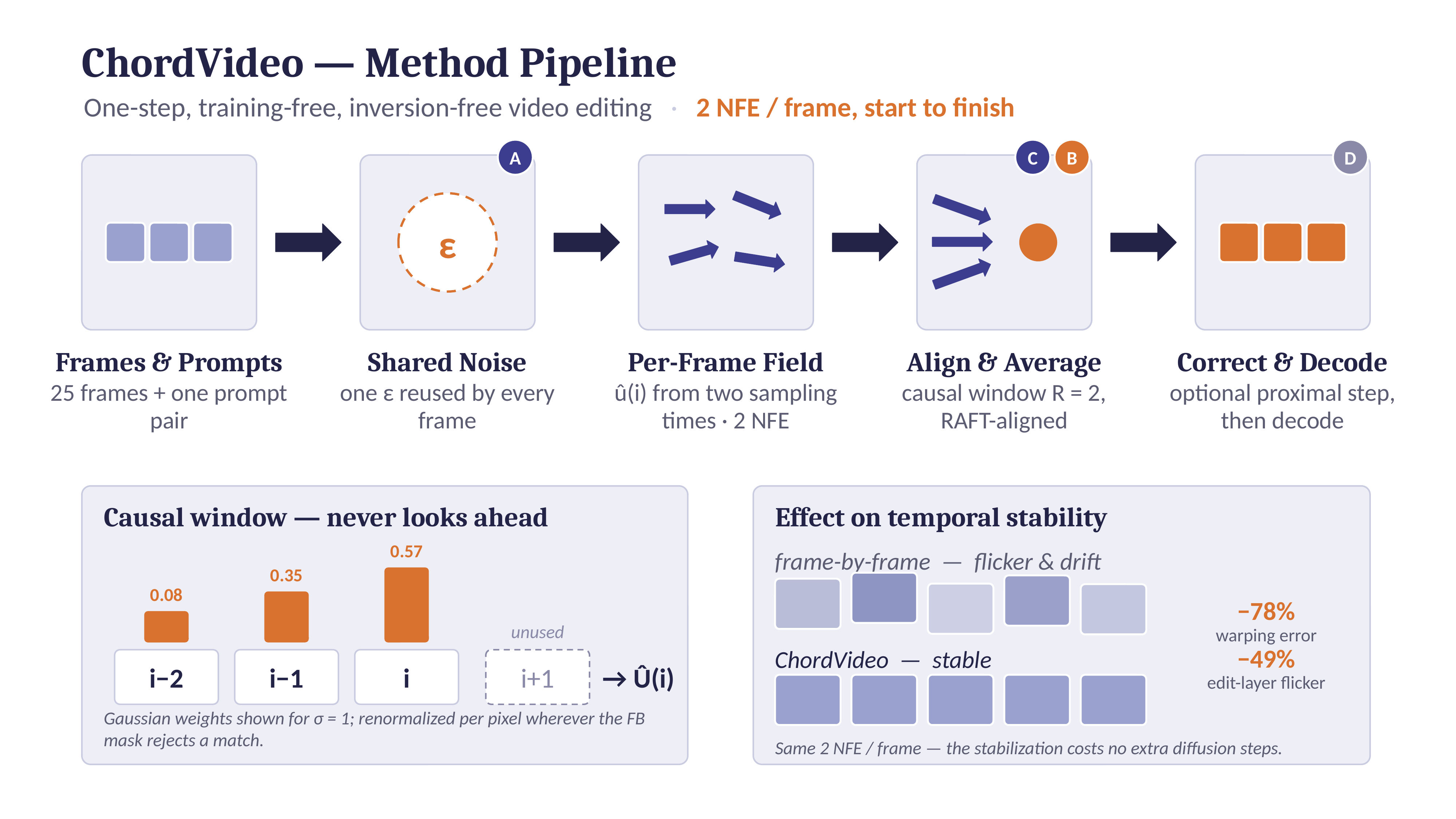}
\captionsetup{skip=0pt}
\caption{Overview of the ChordVideo pipeline. Given input frames and a source--target prompt pair, ChordVideo reuses a shared noise realization across frames (Module~A), estimates per-frame Chord fields, aligns and aggregates neighboring fields with a causal temporal kernel (Modules~B--C), and optionally applies a temporally smoothed proximal correction before decoding (Module~D). The resulting stabilization reduces warping error and flicker without increasing the 2~NFE/frame editing budget.}
\label{fig:chordvideo-pipeline}
\end{figure*}

\subsection{Background: The Chord Field}
\label{sec:background}
We begin by revisiting the origin of ChordEdit’s two-point averaging scheme, as Module~B extends the same principle to video time. Consider the naive drift
$R(x,\cdot)$ evaluated at the two available sampling-time points $t-\Delta$ and $t$.
These evaluations can be interpreted as noisy boundary measurements of an unknown continuous control signal $u(x,\cdot)$. Under the Benamou--Brenier dynamic optimal-transport formulation \citep{benamou2000computational}, the minimum-kinetic-energy path, measured by $\int\lVert\dot u\rVert^2$ , between two fixed endpoints is linear. Evaluating the resulting causal linear interpolant at time $t$ yields:
\begin{equation}
\uhat_t=\frac{\tau R(t-\Delta)+\Delta R(t)}{\tau+\Delta}.
\end{equation}
This construction has two properties that generalize naturally beyond the sampling-time axis. First, because its weights are nonnegative and sum to one, Jensen’s inequality guarantees that the averaging operation does not increase estimator variance. Second, the resulting bias is determined by the degree to which the underlying signal varies across the observations being combined. Along sampling time, this variation corresponds to changes in the edit signal; along video time, it corresponds to residual motion after alignment. Module~B extends this averaging principle to neighboring frames, while Module~C provides the correspondences required to align and compare their edit fields.

\subsection{Module A: Cross-Frame Shared Noise (Zero Cost)}
Naive frame-by-frame editing independently samples
$z^{(i)}\sim K_t(\cdot\mid x^{(i)})$, so stochastic differences appear directly
as temporal flicker. ChordVideo instead reuses a single Monte Carlo noise
realization $\{\epsilon_k\}$ for every frame. Shared noise removes one major
source of inter-frame variation at negligible computational cost and provides
the baseline on which the remaining modules build.

\subsection{Module B: Spatio-Temporal Chord Field (Core)}
We extend the sampling-time average to a causal kernel over both sampling and
video time:
\begin{equation}
\Uhat^{(i)}=\sum_{j} w_{ij}\,\W_{j\to i}\big(\uhat^{(j)}\big),\qquad
w_{ij}\ge0,\ \textstyle\sum_j w_{ij}=1.
\end{equation}
Here, $\uhat^{(j)}$ is the Chord field for frame $j$, computed using the shared
noise from Module~A. The operator $\W_{j\to i}$ aligns this field with frame
$i$, as described in Module~C. The weights $w_{ij}$ form a causal Gaussian
kernel with radius $R$ and bandwidth $\sigma$. Because the weights are
nonnegative and sum to one, the same $L_2$ contraction used by ChordEdit also
applies across video time. The aligned, temporally averaged field is then used
in a single update:
$x_{\text{pred}}^{(i)}=z^{(i)}+s\,\Uhat^{(i)}$.

\subsection{Module C: Alignment Operator $\W$}
Directly averaging fields from different frames would mix locations that refer
to different scene points, producing blur and ghosting. We therefore align each
neighboring field before aggregation. ChordVideo supports two interchangeable
alignment routes:
\begin{itemize}
\item \textbf{Optical flow (default):} RAFT estimates correspondence from frame
$j$ to frame $i$. Forward--backward (FB) consistency masks remove occluded or
unreliable regions before the field is averaged.
\item \textbf{Implicit matching:} self-similarities from the one-step backbone
or DINO features \citep{caron2021dino} are used for mutual nearest-neighbor
matching, following the correspondence strategy of TokenFlow and avoiding a
separate flow model.
\end{itemize}
For the flow route, we renormalize the temporal weights at every pixel after
masking. If all neighboring correspondences are rejected, that pixel uses its
own per-frame field. This fallback avoids propagating unreliable motion and
allows the method to degrade gracefully in occluded regions
(Sec.~\ref{sec:fail}).

\subsection{Module D: Temporally Consistent Proximal Correction (Optional)}
ChordEdit can apply an additional proximal step to strengthen the target
semantics. For frame $i$, this step produces the correction
$p^{(i)}=\text{prox}(x_{\text{pred}}^{(i)})-x_{\text{pred}}^{(i)}$. Applied
independently, however, these corrections introduce a new source of
cross-frame noise. Module~D aligns and smooths the corrections with the same
temporal kernel used in Module~B:
\begin{equation}
x_{\text{tar}}^{(i)}=x_{\text{pred}}^{(i)}+\sum_j w_{ij}\,\W_{j\to i}\big(p^{(j)}\big).
\end{equation}
The module reuses the correspondences already computed by Module~C, so it adds
no optical-flow cost. It retains the semantic benefit of the proximal update
while reducing the flicker introduced by independent per-frame corrections
(Sec.~\ref{sec:theory}).

Figure~\ref{fig:mechanism} provides a more detailed view of this motion-aligned causal aggregation. It shows how neighboring Chord fields are aligned to frame~$i$ and combined with a causal kernel, highlights the conditions underlying the Jensen-style variance contraction, and previews the empirical effect of enlarging the temporal window.

\begin{figure*}[!t]
\centering
\includegraphics[width=0.82\textwidth]{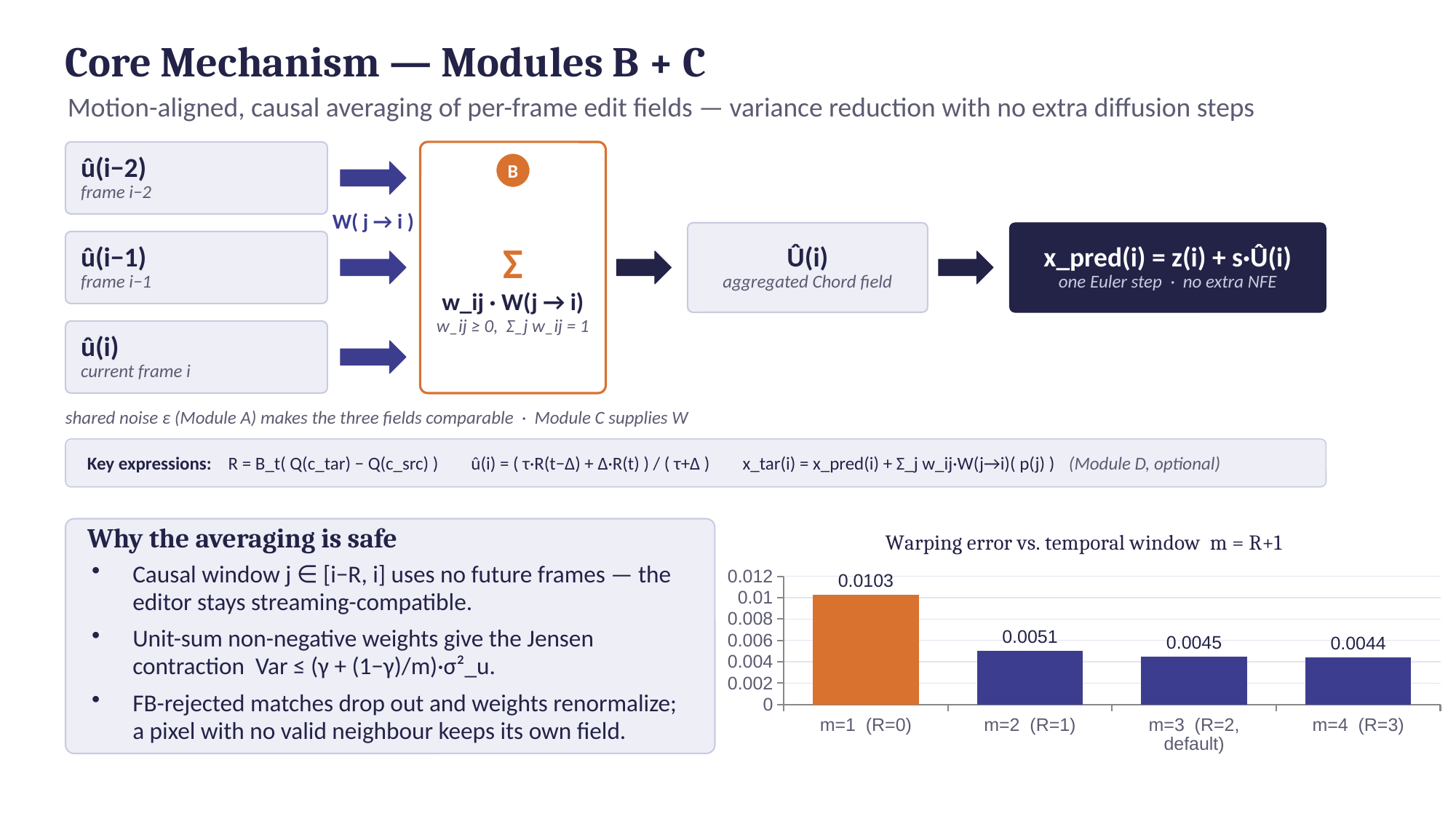}
\captionsetup{skip=0pt}
\caption{\textbf{Motion-aligned causal aggregation (Modules~B and~C).} \emph{Top:} Shared-noise Chord fields $\uhat^{(j)}$ are aligned to frame~$i$ using $\W_{j\to i}$ and combined with causal Gaussian weights $w_{ij}$ to form $\Uhat^{(i)}$, followed by one Euler step. \emph{Bottom left:} The method uses causality, nonnegative unit-sum weights, and per-pixel renormalization with self-field fallback. \emph{Bottom right:} Warping error decreases rapidly as $m=R+1$ grows, then saturates, consistent with the $\gamma+(1-\gamma)/m$ term in Lemma~\ref{lem:jensen}. Formulas correspond to Eqs.~(1)--(5).
}
\label{fig:mechanism}
\end{figure*}

\begin{algorithm}[!tbp]
\caption{ChordVideo edit procedure}
\label{alg:chordvideo}
\begin{algorithmic}[1]
\REQUIRE frames $\{x^{(i)}\}_{i=1}^N$, prompts $c_{\text{src}}, c_{\text{tar}}$, window
radius $R$, kernel $\sigma$, scale $s$
\STATE Encode latents $z^{(i)} \leftarrow \text{VAE-Encode}(x^{(i)})$
\STATE Sample \emph{one} shared noise draw $\epsilon$ (Module~A)
\FOR{$i = 1$ to $N$}
  \STATE $\uhat^{(i)} \leftarrow$ ChordEdit-Field$(z^{(i)}, \epsilon, c_{\text{src}}, c_{\text{tar}})$
  \COMMENT{2 NFE: transport $+$ optional proximal}
\ENDFOR
\FOR{$i = 1$ to $N$}
  \FOR{$j$ in causal window $[i{-}R, i]$}
    \STATE $\mathcal{W}_{j\to i} \leftarrow$ RAFT flow $+$ FB occlusion mask (Module~C)
  \ENDFOR
  \STATE $\Uhat^{(i)} \leftarrow \sum_j w_{ij}\,\mathcal{W}_{j\to i}(\uhat^{(j)})$
  \COMMENT{Module~B, causal Gaussian $w$}
  \STATE $x_{\text{pred}}^{(i)} \leftarrow z^{(i)} + s\,\Uhat^{(i)}$
  \IF{proximal enabled}
    \STATE $p^{(j)} \leftarrow \text{prox}(x_{\text{pred}}^{(j)}) - x_{\text{pred}}^{(j)}$ for
    $j$ in window \COMMENT{reuses $\mathcal{W}$}
    \STATE $x_{\text{tar}}^{(i)} \leftarrow x_{\text{pred}}^{(i)} + \sum_j w_{ij}\,\mathcal{W}_{j\to i}(p^{(j)})$
    \COMMENT{Module~D}
  \ELSE
    \STATE $x_{\text{tar}}^{(i)} \leftarrow x_{\text{pred}}^{(i)}$
  \ENDIF
\ENDFOR
\STATE \textbf{return} $\{\text{VAE-Decode}(x_{\text{tar}}^{(i)})\}_{i=1}^N$
\end{algorithmic}
\end{algorithm}

\section{Theory: A Warping-Error Bound}
\label{sec:theory}

We model the per-frame edit drift as $R(z,t)=u_t(z)+\eta$, where $u_t$ is the
underlying low-energy signal and $\mathbb{E}[\eta]=0$. The decoded edit layer for
frame $i$ is
$e^{(i)}=\D(z^{(i)}+s\Uhat^{(i)})-\D(z^{(i)})$. We measure temporal stability
with the occlusion-masked warping error
\begin{equation}
\Ewarp=\frac{1}{N-1}\sum_i\big\lVert M_{i,i+1}\odot(e^{(i+1)}-\W_{i\to i+1}e^{(i)})\big\rVert_2^2.
\end{equation}

\paragraph{Assumptions.}
We assume that (A1) the decoder is $\Ld$-Lipschitz; (A2) the clean edit field is
warp-consistent up to a residual $\rho$; (A3) the per-frame residual has
variance $\sigma_u^2$, and its post-alignment cross-frame correlation is at
most $\gamma<1$; and (A4) for a fixed correspondence and mask, the warp is
linear and non-expansive.

\begin{lemma}[Video-time Jensen contraction]
\label{lem:jensen}
With $\bar U^{(i)}=\sum_j w_{ij}\W_{j\to i}(u^{(j)})$,
\begin{equation}
\mathbb{E}\lVert\Uhat^{(i)}-\bar U^{(i)}\rVert^2\le
\big(\gamma+(1-\gamma)\lVert w_{i\cdot}\rVert_2^2\big)\sigma_u^2=:\Phi(w)\,\sigma_u^2.
\end{equation}
For a flat (causal) window of size $m$, $\lVert w_{i\cdot}\rVert_2^2=1/m$, so
$\Phi=\gamma+(1-\gamma)/m\to\gamma$ as $m\uparrow$.
\end{lemma}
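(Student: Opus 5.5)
The plan is to write the estimation error as a weighted sum of aligned noise terms and expand its second moment. By linearity of the warp for a fixed correspondence and mask (A4),
\[
\Uhat^{(i)}-\bar U^{(i)}=\sum_j w_{ij}\,\W_{j\to i}\big(\uhat^{(j)}-u^{(j)}\big)=\sum_j w_{ij}\,\xi_j ,
\qquad \xi_j:=\W_{j\to i}(\eta^{(j)}).
\]
Here $\eta^{(j)}=\uhat^{(j)}-u^{(j)}$ is the per-frame residual. It has zero mean, and by (A3) it satisfies $\mathbb{E}\lVert\eta^{(j)}\rVert^2\le\sigma_u^2$. Since $\W_{j\to i}$ is non-expansive (A4), the same bound holds for the aligned residual: $\mathbb{E}\lVert\xi_j\rVert^2\le\sigma_u^2$. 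Expanding the square then gives
\[
\mathbb{E}\Big\lVert\sum_j w_{ij}\xi_j\Big\rVert^2=\sum_j w_{ij}^2\,\mathbb{E}\lVert\xi_j\rVert^2+\sum_{j\ne k}w_{ij}w_{ik}\,\mathbb{E}\langle\xi_j,\xi_k\rangle .
\]

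Next I bound the cross terms using the post-alignment correlation assumption in (A3). I read that assumption as $\mathbb{E}\langle\xi_j,\xi_k\rangle\le\gamma\sigma_u^2$ for $j\ne k$. If the normalization is instead relative to the individual variances, Cauchy--Schwarz still yields the same bound. Since $w_{ij}\ge0$, each cross term is at most $w_{ij}w_{ik}\gamma\sigma_u^2$. The weights sum to one, so
\[
\sum_{j\ne k}w_{ij}w_{ik}=\Big(\sum_j w_{ij}\Big)^2-\lVert w_{i\cdot}\rVert_2^2=1-\lVert w_{i\cdot}\rVert_2^2 .
\]
Combining the diagonal and off-diagonal bounds gives
\[
\sigma_u^2\Big[\lVert w_{i\cdot}\rVert_2^2+\gamma\big(1-\lVert w_{i\cdot}\rVert_2^2\big)\Big]=\big(\gamma+(1-\gamma)\lVert w_{i\cdot}\rVert_2^2\big)\sigma_u^2 ,
\]
which is exactly $\Phi(w)\sigma_u^2$. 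Nonnegativity of the weights is essential at this step: it lets the upper bound on the correlation pass through the weighted sum without any sign issues. This is the Jensen-type contraction, since $\lVert w\rVert_2^2\le\lVert w\rVert_1^2=1$.

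For the flat causal window, $w_{ij}=1/m$ on $m$ indices. This gives $\lVert w_{i\cdot}\rVert_2^2=1/m$, so $\Phi=\gamma+(1-\gamma)/m$, which decreases monotonically to $\gamma$ as $m\to\infty$.

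The main obstacle I expect is the masked, per-pixel renormalized version of the operator in Module~C. There the weights depend on the pixel, and the warp zeroes out occluded regions. I would handle this by applying the argument above pointwise: at each pixel, the renormalized weights are still nonnegative and sum to one, and the masked warp is still linear and non-expansive. I would then integrate over pixels, using $\sup_{\text{pixel}}\lVert w\rVert_2^2$ or the stated per-frame window norm. A subtler point is that the masks and flow are random. I would condition on the correspondence and mask, as (A4) allows, and then take the outer expectation.
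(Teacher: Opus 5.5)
Your proof is correct and follows the paper's argument essentially step for step: linearity of the warp to write the error as $\sum_j w_{ij}\W_{j\to i}(\uhat^{(j)}-u^{(j)})$, non-expansiveness for the diagonal terms, (A3) for the cross terms, nonnegative weights, and the unit-sum identity $\sum_{j\ne k}w_{ij}w_{ik}=1-\lVert w_{i\cdot}\rVert_2^2$. Your closing paragraph on per-pixel renormalization goes beyond the lemma and has two problems: it needs per-pixel versions of (A3), and renormalizing over fewer valid neighbors raises the per-pixel $\lVert w\rVert_2^2$ above $1/m$ (to $1$ at fallback pixels), so the unmasked window norm cannot be used and the supremum gives $\Phi=1$, i.e.\ no contraction, as soon as any pixel falls back.
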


\begin{proof}[Proof sketch]
Let $e_j=\W_{j\to i}(\uhat^{(j)}-u^{(j)})$. By the linearity of the warp,
$\Uhat^{(i)}-\bar U^{(i)}=\sum_j w_{ij}e_j$. Non-expansiveness gives
$\mathbb{E}\lVert e_j\rVert^2\le\sigma_u^2$, while Assumption~(A3) bounds each
cross term by
$\mathbb{E}\langle e_j,e_k\rangle\le\gamma\sigma_u^2$ for $j\ne k$. Expanding
the squared norm yields
\begin{align*}
\mathbb{E}\Big\lVert\sum_j w_{ij}e_j\Big\rVert^2
&=\sum_j w_{ij}^2\mathbb{E}\lVert e_j\rVert^2
 +\sum_{j\ne k}w_{ij}w_{ik}\mathbb{E}\langle e_j,e_k\rangle\\
&\le\sigma_u^2\!\left[\lVert w_{i\cdot}\rVert_2^2
 +\gamma\big(1-\lVert w_{i\cdot}\rVert_2^2\big)\right].
\end{align*}
The identity
$\sum_{j\ne k}w_{ij}w_{ik}=1-\lVert w_{i\cdot}\rVert_2^2$ follows from the
unit-sum weights. Rearranging gives
$\Phi(w)=\gamma+(1-\gamma)\lVert w_{i\cdot}\rVert_2^2$.
\end{proof}

\begin{proposition}[Warping-error bound]
\label{prop:bound}
Under (A1)--(A4) with flat causal window $m=R+1$,
\begin{equation}
\Ewarp\le\underbrace{2\Ld^2 s^2\rho^2}_{\text{bias (motion residual)}}
+\underbrace{2\Ld^2 s^2\big(\gamma+\tfrac{1-\gamma}{m}\big)\sigma_u^2}_{\text{variance (flicker)}}.
\end{equation}
\end{proposition}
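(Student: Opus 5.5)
The plan is to reduce the decoded warping error to a latent-space statement and then split that statement into a motion-residual term and a stochastic term, bounding the latter with Lemma~\ref{lem:jensen}.

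\textbf{Step 1: pass to latent space.} Fix a pair $(i,i+1)$ and write the masked residual as
\[
\delta_i = M_{i,i+1}\odot\big(e^{(i+1)}-\W_{i\to i+1}e^{(i)}\big).
\]
Each edit layer has the form $e^{(i)}=\D(z^{(i)}+s\Uhat^{(i)})-\D(z^{(i)})$. By (A1), $\lVert e^{(i)}\rVert\le \Ld s\lVert\Uhat^{(i)}\rVert$.

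For the difference across the warp, I would make the working modeling step explicit: on unmasked pixels the decoder commutes with the warp up to the Lipschitz constant. Then $\lVert\delta_i\rVert$ is controlled by $\Ld s$ times the masked latent discrepancy
\[
D_i = M\odot\big(\Uhat^{(i+1)}-\W_{i\to i+1}\Uhat^{(i)}\big).
\]
The factor $s$ comes from the scale of the Euler update. I expect this to be the main obstacle. Lipschitzness alone does not let $\D$ pass through $\W$, so either the statement is read as applying (A2)--(A4) at the level of decoded layers, or an extra approximate warp-equivariance of $\D$ on the mask has to be assumed. I would first try the second route and absorb any commutation defect into $\rho$.

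\textbf{Step 2: bias/variance split.} Write $\Uhat=\bar U+(\Uhat-\bar U)$, where $\bar U$ is the aligned average of the clean fields $u^{(j)}$ as in Lemma~\ref{lem:jensen}. This decomposes $D_i$ into a clean part and a noise part. Applying $\lVert a+b\rVert^2\le 2\lVert a\rVert^2+2\lVert b\rVert^2$ produces exactly the two factors of $2$ appearing in the statement.

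\textbf{Step 3: bound the clean part.} For the clean part $M\odot(\bar U^{(i+1)}-\W\bar U^{(i)})$, I would use (A2). It states that the clean field is warp-consistent up to the residual $\rho$. Since $\bar U$ is a convex combination of aligned clean fields, convexity of the squared norm together with the linearity and non-expansiveness of $\W$ from (A4) keeps the residual at most $\rho$. This gives the term $\rho^2$.

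\textbf{Step 4: bound the noise part.} The noise part is the masked, warp-compared difference of the two estimation errors $\Uhat^{(k)}-\bar U^{(k)}$. Here I would not bound each frame separately with the triangle inequality, since that would cost an extra factor of $2$. Instead I would interpret $\sigma_u^2$ in (A3) as the per-location residual variance of the warp-compared flicker, and apply Lemma~\ref{lem:jensen} directly to the aggregated residual. This yields $\Phi(w)\sigma_u^2$. For the flat causal window with $m=R+1$, the lemma gives $\lVert w_{i\cdot}\rVert_2^2=1/m$, so $\Phi=\gamma+(1-\gamma)/m$. Per-pixel renormalization after masking keeps the weights convex, so the lemma still applies.

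\textbf{Step 5: average and assemble.} Taking expectations, multiplying by $\Ld^2 s^2$, and averaging over the $N-1$ pairs gives the claimed bound, because each summand obeys the same bound.

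Step~4's accounting should be stated as the interpretation of (A3), namely that $\sigma_u^2$ already bounds the variance of the frame-difference residual. Without this reading the constant in front of the variance term would become $4$ instead of $2$.
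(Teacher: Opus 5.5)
Your skeleton is the paper's. You pass to latent space with (A1), split $\Uhat^{(i+1)}-\W_{i\to i+1}\Uhat^{(i)}$ around the aligned clean field $\bar U$, bound the clean part by $\rho$ via (A2) and the stochastic part via Lemma~\ref{lem:jensen}, and then use $\lVert a+b\rVert^2\le 2\lVert a\rVert^2+2\lVert b\rVert^2$ and average over frames.

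The genuine difference is your Step~4. The paper keeps (A3) per-frame and applies (A2) directly to $\bar U$. It then bounds the two stochastic pieces $\Uhat^{(i+1)}-\bar U^{(i+1)}$ and $\W_{i\to i+1}(\Uhat^{(i)}-\bar U^{(i)})$ separately by $\Phi(w)\sigma_u^2$. That route needs nothing of the warps beyond linearity and non-expansiveness. But it does not by itself deliver the factor $2$: Minkowski gives only $4\Phi(w)\sigma_u^2$ for their difference, hence $8\Ld^2s^2\Phi(w)\sigma_u^2$ overall. Your ``$4$'' presumes a nonnegative cross term, which an upper bound on correlation does not provide.

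Your reading recovers the stated constant exactly, at two costs. First, $\sigma_u^2$ and $\gamma$ now describe warp-compared frame differences of the residual, not per-frame residuals. Second, you need extra structure to make ``apply the lemma to the aggregated residual'' an actual step. Set $\epsilon^{(j)}=\uhat^{(j)}-u^{(j)}$ and assume full flat windows and composable warps $\W_{i\to i+1}\W_{j\to i}=\W_{j\to i+1}$. Then
\begin{align*}
&(\Uhat^{(i+1)}-\bar U^{(i+1)})-\W_{i\to i+1}(\Uhat^{(i)}-\bar U^{(i)})\\
&\quad=\frac{1}{m}\sum_{k=0}^{R}\W_{i+1-k\to i+1}\big(\epsilon^{(i+1-k)}-\W_{i-k\to i+1-k}\,\epsilon^{(i-k)}\big).
\end{align*}
So the noise part is a flat average of aligned frame differences, and the lemma's computation applies verbatim. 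The same identity with $u$ in place of $\epsilon$ is what makes your Step~3 bound by $\rho$ valid.

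State these hypotheses explicitly. Also note that per-pixel renormalization keeps the lemma applicable only with the renormalized $\Phi(w)$. That value exceeds $\gamma+(1-\gamma)/m$ wherever a neighbour is rejected, and renormalization also breaks the identity.

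Finally, in Step~1 decoder--warp commutation alone does not suffice. The reason is that $e^{(i+1)}$ and $\W e^{(i)}$ are increments of $\D$ at different base points. Commutation only gives $\lVert e^{(i+1)}-\W e^{(i)}\rVert\le \Ld s\lVert\Uhat^{(i+1)}-\W\Uhat^{(i)}\rVert+2\Ld\lVert z^{(i+1)}-\W z^{(i)}\rVert$, so source-latent inconsistency must be absorbed into the bias too. The paper asserts this step from (A1) without comment, so the caveat applies equally to its proof.
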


\begin{proof}[Proof sketch]
Assumption~(A1) gives
\begin{equation*}
\lVert e^{(i+1)}-\W_{i\to i+1}e^{(i)}\rVert
\le \Ld s\lVert U^{(i+1)}-\W_{i\to i+1}U^{(i)}\rVert,
\end{equation*}
where $U^{(i)}\equiv\Uhat^{(i)}$. We decompose the field difference around the
aligned clean field $\bar U$:
\begin{align*}
U^{(i+1)}-\W U^{(i)}
={}&(\Uhat^{(i+1)}-\bar U^{(i+1)})
 +(\bar U^{(i+1)}-\W\bar U^{(i)})\\
&-\W(\Uhat^{(i)}-\bar U^{(i)}).
\end{align*}
The middle term is the motion-induced bias and has norm at most $\rho$ by
Assumption~(A2). The two remaining terms are stochastic deviations, each
controlled by Lemma~\ref{lem:jensen}; Assumption~(A4) ensures that warping does
not increase their norms. Applying the standard inequality
$\lVert a+b\rVert^2\le2\lVert a\rVert^2+2\lVert b\rVert^2$, taking expectations,
and averaging over frames gives
$\Ewarp\le2\Ld^2s^2\rho^2+2\Ld^2s^2\Phi(w)\sigma_u^2$. Substituting
$\Phi(w)=\gamma+(1-\gamma)/m$ proves the result. The supplementary material
tracks the constants associated with the occlusion mask $M$.
\end{proof}

\noindent\textbf{Consequences.}
The bound predicts diminishing variance reduction as $m$ grows, with a floor at $2\Ld^2s^2(\rho^2+\gamma\sigma_u^2)$. Motion alignment is essential to avoid bias and ghosting, while larger windows may slightly weaken edits through temporal smoothing. The same trade-off applies to Module~D; full proofs appear in the supplement.

\section{Experiments}
\label{sec:exp}

\paragraph{Setup.}
We evaluate SD-Turbo and SwiftBrush-v2 on LOVEU-TGVE-2023 \citep{loveu2023tgve} using 25 uniformly sampled DAVIS frames per clip at $512\times512$. Experiments use one GPU, fp16, and seed~42. Unless noted, ChordVideo uses shared noise, $R{=}2$, $\sigma{=}1$, a causal Gaussian kernel, RAFT alignment, and temporal proximal correction.

\paragraph{Evaluation protocol.}
We evaluate four aspects: temporal consistency using occlusion-masked warping error, edit-layer flicker, and CLIP frame consistency \citep{radford2021clip}; edit quality using target-prompt similarity and the target--source CLIP gap; background fidelity using whole-frame PSNR and MSE; and efficiency using NFE/frame, throughput, and peak VRAM.

\paragraph{Internal baselines.}
All internal variants use the same backbone, input frames, and random seed.
\texttt{frame\_by\_frame} applies ChordEdit independently to each frame with
independent noise. \texttt{shared\_noise} adds only Module~A.
\texttt{chordvideo} uses the full A+B+C+D configuration.

\paragraph{State-of-the-art comparison.}
For Sec.~\ref{sec:sota}, we evaluate seven external editors using official implementations and recommended settings on the same seven DAVIS clips, frames, and prompts. We also include 50-step SDEdit \citep{meng2021sdedit}, two-step SD-Turbo img2img, and CausVid with PnP injection \citep{yin2025causvid,tumanyan2023pnp}. All methods use identical implementations of RAFT warping error, CLIP-L/14 and directional similarity, edit-layer flicker, and mean pixel change.

\paragraph{Implementation details.}
Table~\ref{tab:hyper} lists the fixed hyperparameters for both backbones and all clips. Here, $t_{\text{start}}$ and $t_\Delta$ define the sampling times, $R$ and $\sigma$ the temporal kernel, $s$ the Euler scale, and the FB threshold the valid Module~C correspondences.

\begin{table}[!tbp]
\centering
\scriptsize
\begin{tabular}{ll}
\toprule
Hyperparameter & Default \\
\midrule
Sampling start / step $t_{\text{start}}, t_\Delta$ & $0.90,\ 0.15$ \\
Step scale $s$ & $1.0$ \\
Window radius $R$ & $2$ \\
Kernel bandwidth $\sigma$ & $1.0$ \\
Kernel shape & causal Gaussian \\
Alignment route & RAFT flow \\
FB occlusion threshold & $1.5$\,px \\
Proximal mode & temporal (Module~D) \\
Frames / clip & 25 \\
Resolution & $512{\times}512$ \\
Seed & 42 \\
\bottomrule
\end{tabular}
\captionsetup{font=scriptsize}
\caption{Default ChordVideo hyperparameters, fixed across all backbones and clips.}
\label{tab:hyper}
\end{table}

Table~\ref{tab:clips} lists the seven DAVIS clips and the source--target prompt pairs
used throughout Secs.~\ref{sec:exp}--\ref{sec:sota}. Each method receives the same 25
uniformly sampled frames and the same prompt pair for every clip.

\begin{table}[!tbp]
\centering
\scriptsize
\setlength{\tabcolsep}{3pt}
\begin{tabular}{lll}
\toprule
Clip & Source prompt & Target prompt \\
\midrule
bear & a bear walking & a polar bear walking \\
blackswan & a black swan swimming & a white swan swimming \\
 & in a pond & in a pond \\
goldfish & goldfish swimming in a & blue fish swimming in a \\
 & fish tank & fish tank \\
mbike & a motorcyclist doing a & a motorcyclist on a white \\
 & trick on a road & motorcycle doing a trick \\
drift & a car drifting on a & a red car drifting on a \\
 & race track & race track \\
lindyhop & people dancing in a dance & people dancing in a dance \\
 & hall, wooden floor & hall, red floor \\
swimmer & a person diving into a & a person diving into a \\
 & swimming pool & green swimming pool \\
\bottomrule
\end{tabular}
\captionsetup{font=scriptsize}
\caption{Evaluation clips and source--target prompts. Each clip contains 25 sampled
frames at $512\times512$ resolution.}
\label{tab:clips}
\end{table}

\begin{table*}[!t]
\centering
\scriptsize
\setlength{\tabcolsep}{4.5pt}
\begin{tabular}{lccrrrrrrr}
\toprule
Method & TF & IF & Steps/f & CLIP-T $\uparrow$ & CLIP-F $\uparrow$ & Warp-E $\downarrow$ & Flicker $\downarrow$ & CDS $\uparrow$ & Pix$\Delta$ $\downarrow$ \\
\midrule
TokenFlow \citep{geyer2024tokenflow} & \checkmark & $\times$ & 50+50 & 0.2576 & 0.9744 & \textbf{0.0019} & \textbf{0.0375} & 0.1375 & 20.7 \\
FlowDirector \citep{li2026flowdirector} & \checkmark & \checkmark & 50 & 0.2579 & 0.9683 & 0.0054 & 0.0436 & 0.1620 & \textbf{19.7} \\
FateZero \citep{qi2023fatezero} & \checkmark & $\times$ & 10+10 & 0.1612 & \textbf{0.9926} & 0.0262 & 0.0838 & 0.0285 & 116.2 \\
FLATTEN \citep{cong2024flatten} & \checkmark & $\times$ & 50+50 & 0.2575 & 0.9792 & 0.0110 & 0.0706 & \textbf{0.1700} & 30.9 \\
ControlVideo \citep{zhao2023controlvideo} & $\times$ (300) & $\times$ & 50 & 0.2659 & 0.9840 & 0.0213 & 0.0752 & 0.1475 & 41.1 \\
Tune-A-Video \citep{wu2023tuneavideo} & $\times$ (300) & $\times$ & 50+50 & 0.2602 & 0.9852 & 0.0344 & 0.0820 & 0.1298 & 71.9 \\
Video-P2P \citep{liu2023videop2p}$^\dagger$ & $\times$ (500) & $\times$ & 50+50 & \textbf{0.2714} & 0.9824 & 0.0359 & 0.0809 & 0.1203 & 38.8 \\
\midrule
SDEdit SD1.5 per-frame \citep{meng2021sdedit} & \checkmark & \checkmark & 50 & 0.2563 & 0.9450 & 0.0308 & 0.0927 & 0.1226 & 34.7 \\
CausVid $+$ PnP \citep{yin2025causvid} & \checkmark & $\times$ & --- & 0.2301 & 0.9762 & 0.0324 & 0.0727 & 0.0604 & 37.6 \\
SD-Turbo naive per-frame & \checkmark & \checkmark & 2 & 0.2587 & 0.9580 & 0.0244 & 0.0944 & 0.1609 & 48.7 \\
\midrule
ChordEdit per-frame (ours, base) & \checkmark & \checkmark & \textbf{2} & 0.2407 & 0.9026 & 0.0290 & 0.1121 & 0.0720 & 23.9 \\
\textbf{ChordVideo (ours, full)} & \checkmark & \checkmark & \textbf{2} & 0.2525 & 0.9241 & 0.0052 & 0.0538 & 0.1584 & 23.8 \\
\bottomrule
\end{tabular}
\captionsetup{font=scriptsize}
\caption{
Comparison with state-of-the-art editors on seven DAVIS clips using identical frames, prompts, and metrics. TF denotes training-free operation, IF inversion-free operation, and Steps/f inversion plus sampling steps per frame. $^\dagger$Video-P2P completes five clips; on this subset, ChordVideo achieves CLIP-T 0.2569 versus 0.2566 for TokenFlow and 0.2564 for FlowDirector.}
\label{tab:sota}
\end{table*}

\subsection{Main Results}

\begin{figure}[!tbp]
\centering
\includegraphics[width=\linewidth]{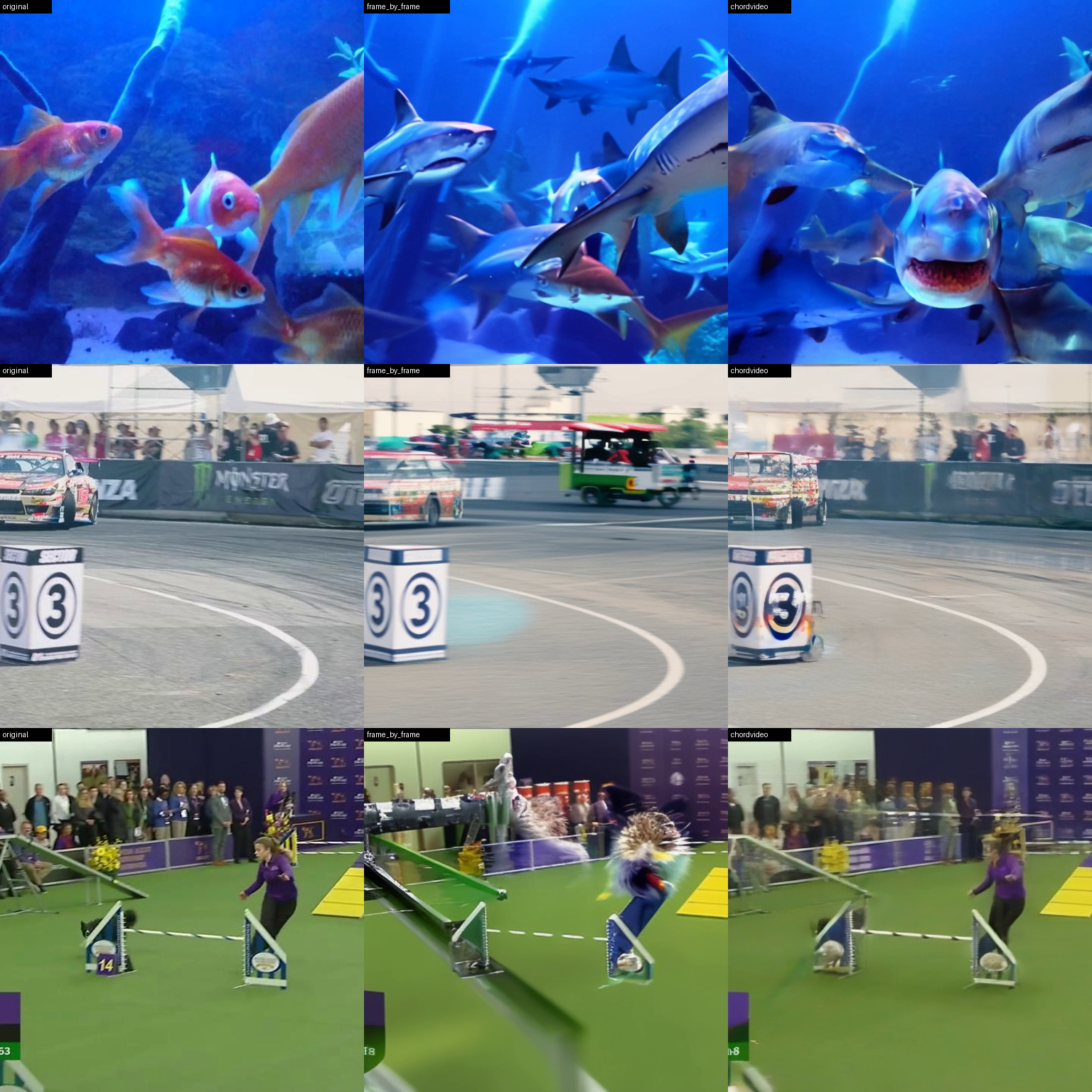}
\caption{Qualitative comparison on three clips. Columns show the source, per-frame ChordEdit, and ChordVideo. ChordVideo better preserves source geometry and target edits, while per-frame editing exhibits temporal drift. Additional videos and methods appear in the supplement.}
\label{fig:teaser}
\end{figure}


\begin{table}[!tbp]
\centering
{\scriptsize
\setlength{\tabcolsep}{2.5pt}
\renewcommand{\arraystretch}{0.90}
\begin{tabular}{llrrr}
\toprule
Cat. & Metric & \texttt{f-by-f} & \texttt{shared} & \textbf{\texttt{CV}} \\
\midrule
\multirow{3}{*}{Temp.}
 & Warp $\downarrow$   & 0.02784 & 0.01419 & \textbf{0.00610} \\
 & Flicker $\downarrow$ & 0.11487 & 0.07473 & \textbf{0.05806} \\
 & CLIP-tc $\uparrow$   & 0.8181  & 0.8792  & \textbf{0.9053} \\
\midrule
Qual. & CLIP-tgt $\uparrow$ & \textbf{0.2547} & 0.2548 & 0.2421 \\
BG    & PSNR $\uparrow$     & 17.30 & 17.74 & \textbf{19.02} \\
\midrule
\multirow{3}{*}{Eff.}
 & NFE $\downarrow$  & 2    & 2    & \textbf{2} \\
 & FPS $\uparrow$    & 7.47 & 7.77 & 3.91 \\
 & VRAM $\downarrow$ & 4549 & 4840 & 4956 \\
\bottomrule
\end{tabular}
}
\captionsetup{font=scriptsize}
\caption{SD-Turbo results on six clips. CV denotes ChordVideo; VRAM is reported in MB.}
\label{tab:main_sdturbo}
\end{table}


\begin{table}[!tbp]
\centering
{\scriptsize
\setlength{\tabcolsep}{2.5pt}
\renewcommand{\arraystretch}{0.90}
\begin{tabular}{llrrr}
\toprule
Cat. & Metric & \texttt{f-by-f} & \texttt{shared} & \textbf{\texttt{CV}} \\
\midrule
\multirow{3}{*}{Temp.}
 & Warp $\downarrow$    & 0.02326 & 0.01151 & \textbf{0.00501} \\
 & Flicker $\downarrow$ & 0.11279 & 0.07555 & \textbf{0.05715} \\
 & CLIP-tc $\uparrow$   & 0.8032  & 0.8521  & \textbf{0.9017} \\
\midrule
Qual. & CLIP-tgt $\uparrow$ & 0.2460 & \textbf{0.2470} & 0.2375 \\
BG    & PSNR $\uparrow$     & 18.04  & 18.34 & \textbf{19.56} \\
\midrule
\multirow{3}{*}{Eff.}
 & NFE $\downarrow$  & 2    & 2    & \textbf{2} \\
 & FPS $\uparrow$    & 7.59 & 7.66 & 4.13 \\
 & VRAM $\downarrow$ & 4844 & 4844 & 4960 \\
\bottomrule
\end{tabular}
}
\captionsetup{font=scriptsize}
\caption{SwiftBrush-v2 results on six clips. CV denotes ChordVideo; VRAM is in MB.}
\label{tab:main_sbv2}
\end{table}

Across both backbones, ChordVideo reduces warping error by about $78\%$ and flicker by $49\%$, improves CLIP temporal consistency by 9--10 points and PSNR by $1.5$--$1.7$,dB, while retaining 2~NFE/frame. These gains come with lower throughput, slightly higher memory use, and a 0.01--0.02 drop in CLIP target similarity, largely due to the difficult drift-turn edit.

\subsection{Comparison with State-of-the-Art Editors}
\label{sec:sota}

\paragraph{Pareto analysis.}

Table~\ref{tab:sota} shows that ChordVideo is the only method that is training-free, inversion-free, and limited to 2~NFE/frame. Competing editors require 10--50$\times$ more sampling or inversion steps, while tuning-based methods add 300--500 optimization steps per video; Table~\ref{tab:cost} summarizes these clip-level costs.

Despite this budget, ChordVideo achieves a warping error of 0.0052, comparable to FlowDirector's 0.0054 and better than five of seven multi-step editors. TokenFlow reaches 0.0019 but uses inversion and roughly 50$\times$ more evaluations. Because CLIP-F can reward smooth but unsuccessful edits, we report it alongside edit quality and warping error. Overall, ChordVideo offers a strong efficiency--consistency trade-off.

\begin{table}[!tbp]
\centering
\scriptsize
\setlength{\tabcolsep}{4pt}
\begin{tabular}{lrrrr}
\toprule
Method & Tune & /Frame & Total & $\times$CV \\
\midrule
ChordVideo (ours) & 0 & 2 & 50 & \textbf{1$\times$} \\
SD-Turbo naive (ref) & 0 & 2 & 50 & 1$\times$ \\
FateZero & 0 & 20 & 500 & 10$\times$ \\
FlowDirector & 0 & 50 & 1250 & 25$\times$ \\
SDEdit SD1.5 (ref) & 0 & 50 & 1250 & 25$\times$ \\
ControlVideo & 300 & 50 & 1550 & 31$\times$ \\
TokenFlow & 0 & 100 & 2500 & 50$\times$ \\
FLATTEN & 0 & 100 & 2500 & 50$\times$ \\
Tune-A-Video & 300 & 100 & 2800 & 56$\times$ \\
Video-P2P & 500 & 100 & 3000 & \textbf{60$\times$} \\
\bottomrule
\end{tabular}
\captionsetup{font=scriptsize}
\caption{
Counted steps for a 25-frame clip, including per-video tuning and per-frame inversion/sampling from Table~\ref{tab:sota}. ``$\times$CV'' denotes the ratio to ChordVideo's 50 steps; external editors require 10--60$\times$ more.
}
\label{tab:cost}
\end{table}

\subsection{Module Ablations}

\begin{table}[!tbp]
\centering
\small
\setlength{\tabcolsep}{4pt}
\begin{tabular}{lrrrr}
\toprule
Config & warp $\downarrow$ & flick $\downarrow$ & CLIP-tc $\uparrow$ & CLIP-tgt $\uparrow$ \\
\midrule
\textbf{full (A+B+C+D)} & \textbf{0.00447} & \textbf{0.0482} & 0.9201 & 0.2378 \\
A off: indep.\ noise & 0.00732 & 0.0639 & 0.8776 & 0.2313 \\
B $R{=}0$ (off) & 0.01028 & 0.0642 & 0.8937 & 0.2415 \\
B $R{=}1$ & 0.00505 & 0.0504 & 0.9170 & 0.2393 \\
B $R{=}3$ & 0.00443 & 0.0481 & 0.9206 & 0.2376 \\
C none & 0.00585 & 0.0478 & 0.9296 & 0.2360 \\
C implicit & 0.00642 & 0.0549 & 0.9080 & 0.2396 \\
D none (field only) & 0.00592 & 0.0483 & 0.9518 & \textbf{0.2026} \\
D shared (per-frame) & 0.00596 & 0.0533 & 0.8988 & 0.2414 \\
\bottomrule
\end{tabular}
\captionsetup{font=scriptsize}
\caption{Module ablation on four clips with SD-Turbo. Each row changes one component
relative to the full A+B+C+D configuration.}
\label{tab:ablation}
\end{table}

Table~\ref{tab:ablation} shows that Module~B yields the largest gain: disabling aggregation raises warping error by $130\%$, while removing shared noise raises it by $64\%$. Gains saturate beyond $R{=}2$. RAFT performs best, and temporal smoothing in Module~D preserves semantics while minimizing flicker and warping error.

\begin{table}[!tbp]
\centering
\scriptsize
\setlength{\tabcolsep}{4pt}
\begin{tabular}{lrrrr}
\toprule
Clip & Occl.\ $\%$ & Flow (px) & Warp+FB & Warp$-$FB \\
\midrule
mbike-trick & 1.3 & 0.9 & 0.00211 & 0.00211 \\
drift-turn & 2.4 & 52.5 & 0.00667 & 0.00645 \\
lindy-hop & 12.1 & 3.4 & 0.00766 & 0.00697 \\
swimmer & 47.2 & 20.0 & 0.01207 & 0.01152 \\
\bottomrule
\end{tabular}
\captionsetup{font=scriptsize}
\caption{Failure-case indicators for SD-Turbo. Occlusion fraction and mean flow
magnitude describe clip difficulty; Warp+FB and Warp$-$FB report warping error with and
without masked weight renormalization.}
\label{tab:failure}
\end{table}

\subsection{Failure Cases \& Fallback}
\label{sec:fail}




Table~\ref{tab:failure} shows distinct challenges from occlusion and motion. Masked renormalization slightly raises measured warping error but reduces ghosting by rejecting unreliable correspondences; examples are provided in the supplement.

\section{Positioning \& Limitations}

\paragraph{Generality across backbones.}
Despite different distillation objectives and U-Net parameters, SD-Turbo and SwiftBrush-v2 show similar gains: about $78\%$ lower warping error, $49\%$ lower flicker, and comparable CLIP consistency and PSNR improvements. Because Modules~A--D act on the edit field $R(x,t)$ and decoder $\D$ rather than backbone parameters, ChordVideo can extend to any one-step editor with prompt-conditioned drift.

\paragraph{Computational overhead.}
ChordVideo keeps 2~NFE/frame, but RAFT reduces throughput from about 7.5 to 4 FPS. Lighter flow or stronger implicit matching could reduce this cost.


\paragraph{Novelty beyond a direct extension.}
ChordVideo analyzes the bias--variance trade-off, enables training-free and inversion-free editing at 2~NFE/frame, and unifies sampling- and video-time stabilization under low-energy optimal transport.





\paragraph{Scope of the claims.}
ChordVideo does not lead every metric, but offers competitive stability and preservation at 2~NFE/frame without inversion or per-video training.

\paragraph{Limitations.}
RAFT lowers throughput to about 4 FPS, large motion may weaken edits, and CLIP consistency trails some multi-step methods. Evaluation is limited to six or seven clips on one non-batched GPU, and PSNR is only a preservation proxy.


\paragraph{Future work.}
Future work will integrate low-energy constraints into video backbones, adapt $R$ using occlusion, and expand TGVE and human evaluation.

\section{Conclusion}
ChordVideo extends low-energy smoothing to video, achieving one-step, training-free temporal editing. Across two backbones, it reduces warping error by $78\%$ and flicker by $49\%$ at 2~NFE/frame, offering competitive consistency and source preservation without inversion or per-video training.
\bibliography{references}

@inproceedings{chordedit2026,
  title     = {{ChordEdit}: One-Step Low-Energy Transport for Image Editing},
  author    = {Lu, Liangsi and Chen, Xuhang and Guo, Minzhe and Li, Shichu and Wang, Jingchao and Shi, Yang},
  booktitle = {IEEE/CVF Conference on Computer Vision and Pattern Recognition (CVPR)},
  year      = {2026}
}

@inproceedings{sauer2024sdturbo,
  title     = {Adversarial Diffusion Distillation},
  author    = {Sauer, Axel and Lorenz, Dominik and Blattmann, Andreas and Rombach, Robin},
  booktitle = {European Conference on Computer Vision (ECCV)},
  year      = {2024}
}

@inproceedings{nguyen2024swiftbrush,
  title     = {{SwiftBrush} v2: Make Your One-Step Diffusion Model Better Than Its Teacher},
  author    = {Nguyen, Trung Tuan and Dao, Quan and Phung, Dinh and Tran, Anh},
  booktitle = {European Conference on Computer Vision (ECCV)},
  year      = {2024}
}

@inproceedings{liu2023instaflow,
  title     = {{InstaFlow}: One Step is Enough for High-Quality Diffusion-Based Text-to-Image Generation},
  author    = {Liu, Xingchao and Zhang, Xiwen and Ma, Jianzhu and Peng, Jian and Liu, Qiang},
  booktitle = {International Conference on Learning Representations (ICLR)},
  year      = {2023}
}

@inproceedings{teed2020raft,
  title     = {{RAFT}: Recurrent All-Pairs Field Transforms for Optical Flow},
  author    = {Teed, Zachary and Deng, Jia},
  booktitle = {European Conference on Computer Vision (ECCV)},
  year      = {2020}
}

@inproceedings{lai2018learning,
  title     = {Learning Blind Video Temporal Consistency},
  author    = {Lai, Wei-Sheng and Huang, Jia-Bin and Wang, Oliver and Shechtman, Eli and Yumer, Ersin and Yang, Ming-Hsuan},
  booktitle = {European Conference on Computer Vision (ECCV)},
  year      = {2018}
}

@inproceedings{geyer2024tokenflow,
  title     = {{TokenFlow}: Consistent Diffusion Features for Consistent Video Editing},
  author    = {Geyer, Michal and Bar-Tal, Omer and Bagon, Shai and Dekel, Tali},
  booktitle = {International Conference on Learning Representations (ICLR)},
  year      = {2024}
}

@inproceedings{wu2023tuneavideo,
  title     = {{Tune-A-Video}: One-Shot Tuning of Image Diffusion Models for Text-to-Video Generation},
  author    = {Wu, Jay Zhangjie and Ge, Yixiao and Wang, Xintao and Lei, Stan Weixian and Gu, Yuchao and Shi, Yufei and Hsu, Wynne and Shan, Ying and Qie, Xiaohu and Shou, Mike Zheng},
  booktitle = {IEEE/CVF International Conference on Computer Vision (ICCV)},
  year      = {2023}
}

@inproceedings{qi2023fatezero,
  title     = {{FateZero}: Fusing Attentions for Zero-Shot Text-Based Video Editing},
  author    = {Qi, Chenyang and Cun, Xiaodong and Zhang, Yong and Lei, Chenyang and Wang, Xintao and Shan, Ying and Chen, Qifeng},
  booktitle = {IEEE/CVF International Conference on Computer Vision (ICCV)},
  year      = {2023}
}

@inproceedings{khachatryan2023text2video,
  title     = {{Text2Video-Zero}: Text-to-Image Diffusion Models Are Zero-Shot Video Generators},
  author    = {Khachatryan, Levon and Movsisyan, Andranik and Tadevosyan, Vahram and Henschel, Roberto and Wang, Zhangyang and Navasardyan, Shant and Shi, Humphrey},
  booktitle = {IEEE/CVF International Conference on Computer Vision (ICCV)},
  year      = {2023}
}

@inproceedings{yang2023rerender,
  title     = {Rerender A Video: Zero-Shot Text-Guided Video-to-Video Translation},
  author    = {Yang, Shuai and Zhou, Yifan and Liu, Ziwei and Loy, Chen Change},
  booktitle = {SIGGRAPH Asia},
  year      = {2023}
}

@article{benamou2000computational,
  title     = {A Computational Fluid Mechanics Solution to the {Monge-Kantorovich} Mass Transfer Problem},
  author    = {Benamou, Jean-David and Brenier, Yann},
  journal   = {Numerische Mathematik},
  volume    = {84},
  number    = {3},
  pages     = {375--393},
  year      = {2000}
}

@inproceedings{caron2021dino,
  title     = {Emerging Properties in Self-Supervised Vision Transformers},
  author    = {Caron, Mathilde and Touvron, Hugo and Misra, Ishan and J{\'e}gou, Herv{\'e} and Mairal, Julien and Bojanowski, Piotr and Joulin, Armand},
  booktitle = {IEEE/CVF International Conference on Computer Vision (ICCV)},
  year      = {2021}
}

@inproceedings{radford2021clip,
  title     = {Learning Transferable Visual Models From Natural Language Supervision},
  author    = {Radford, Alec and Kim, Jong Wook and Hallacy, Chris and Ramesh, Aditya and Goh, Gabriel and Agarwal, Sandhini and Sastry, Girish and Askell, Amanda and Mishkin, Pamela and Clark, Jack and Krueger, Gretchen and Sutskever, Ilya},
  booktitle = {International Conference on Machine Learning (ICML)},
  year      = {2021}
}

@article{loveu2023tgve,
  title={Cvpr 2023 text guided video editing competition},
  author={Wu, Jay Zhangjie and Li, Xiuyu and Gao, Difei and Dong, Zhen and Bai, Jinbin and Singh, Aishani and Xiang, Xiaoyu and Li, Youzeng and Huang, Zuwei and Sun, Yuanxi and others},
  journal={arXiv preprint arXiv:2310.16003},
  year={2023}
}

@inproceedings{li2026flowdirector,
  title     = {{FlowDirector}: Training-Free Flow Steering for Precise Text-to-Video Editing},
  author    = {Li, Guangzhao and Yang, Yanming and Song, Chenxi and Zhang, Chi},
  booktitle = {IEEE/CVF Conference on Computer Vision and Pattern Recognition (CVPR)},
  year      = {2026}
}

@inproceedings{cong2024flatten,
  title     = {{FLATTEN}: Optical Flow-Guided Attention for Consistent Text-to-Video Editing},
  author    = {Cong, Yuren and Xu, Mengmeng and Simon, Christian and Chen, Shoufa and Ren, Jiawei and Xie, Yanping and Perez-Rua, Juan-Manuel and Rosenhahn, Bodo and Xiang, Tao and He, Sen},
  booktitle = {International Conference on Learning Representations (ICLR)},
  year      = {2024}
}

@article{zhao2023controlvideo,
  title     = {{ControlVideo}: Adding Conditional Control for One-Shot Text-to-Video Editing},
  author    = {Zhao, Min and Wang, Rongzhen and Bao, Fan and Li, Chongxuan and Zhu, Jun},
  journal   = {arXiv preprint arXiv:2305.17098},
  year      = {2023}
}

@article{liu2023videop2p,
  title={Video-p2p: Video editing with cross-attention control},
  author={Liu, Shaoteng and Zhang, Yuechen and Li, Wenbo and Lin, Zhe and Jia, Jiaya},
  journal={arXiv preprint arXiv:2303.04761},
  year={2023}
}

@article{meng2021sdedit,
  title={Sdedit: Guided image synthesis and editing with stochastic differential equations},
  author={Meng, Chenlin and He, Yutong and Song, Yang and Song, Jiaming and Wu, Jiajun and Zhu, Jun-Yan and Ermon, Stefano},
  journal={arXiv preprint arXiv:2108.01073},
  year={2021}
}

@inproceedings{yin2025causvid,
  title     = {From Slow Bidirectional to Fast Autoregressive Video Diffusion Models},
  author    = {Yin, Tianwei and Zhang, Qiang and Zhang, Richard and Freeman, William T. and Durand, Fr{\'e}do and Shechtman, Eli and Huang, Xun},
  booktitle = {IEEE/CVF Conference on Computer Vision and Pattern Recognition (CVPR)},
  year      = {2025}
}

@inproceedings{tumanyan2023pnp,
  title     = {Plug-and-Play Diffusion Features for Text-Driven Image-to-Image Translation},
  author    = {Tumanyan, Narek and Geyer, Michal and Bagon, Shai and Dekel, Tali},
  booktitle = {IEEE/CVF Conference on Computer Vision and Pattern Recognition (CVPR)},
  year      = {2023}
}

@inproceedings{song2023consistency,
  title     = {Consistency Models},
  author    = {Song, Yang and Dhariwal, Prafulla and Chen, Mark and Sutskever, Ilya},
  booktitle = {International Conference on Machine Learning (ICML)},
  year      = {2023}
}

@inproceedings{lipman2022flowmatching,
  title={Flow matching for generative modeling},
  author={Lipman, Yaron and Chen, Ricky TQ and Ben-Hamu, Heli and Nickel, Maximilian and Le, Matthew},
  booktitle={The eleventh international conference on learning representations},
  year={2022}
}

@inproceedings{liu2023rectifiedflow,
  title     = {Flow Straight and Fast: Learning to Generate and Transfer Data with Rectified Flow},
  author    = {Liu, Xingchao and Gong, Chengyue and Liu, Qiang},
  booktitle = {International Conference on Learning Representations (ICLR)},
  year      = {2023}
}


\end{document}